\icmltitlerunning{Expressiveness of Rectifier Networks}
\newcommand{\relu}{ReLU}
\newcommand{\relus}{ReLUs}
\newcommand{\threshold}[1]{\sgn\p{#1}}
\newcommand{\rectifier}[1]{R\p{#1}}
\begin{document}

\twocolumn[
\icmltitle{Expressiveness of Rectifier Networks}

\icmlauthor{Xingyuan Pan}{xpan@cs.utah.edu}
\icmlauthor{Vivek Srikumar}{svivek@cs.utah.edu}
\icmladdress{The University of Utah,
            Salt Lake City, UT 84112, USA}

\icmlkeywords{machine learning, artificial neural network, rectified linear unit, expressiveness}

\vskip 0.3in
]

\begin{abstract}
  Rectified Linear Units (ReLUs) have been shown to ameliorate the
  vanishing gradient problem, allow for efficient backpropagation, and
  empirically promote sparsity in the learned parameters. They have
  led to state-of-the-art results in a variety of applications.
  However, unlike threshold and sigmoid networks, ReLU networks are
  less explored from the perspective of their expressiveness. This
  paper studies the expressiveness of ReLU networks. We characterize
  the decision boundary of two-layer ReLU networks by constructing
  functionally equivalent threshold networks. We show that while the
  decision boundary of a two-layer ReLU network can be captured by a
  threshold network, the latter may require an exponentially larger
  number of hidden units. We also formulate sufficient conditions for
  a corresponding logarithmic reduction in the number of hidden units
  to represent a sign network as a ReLU network. Finally, we
  experimentally compare threshold networks and their much smaller
  ReLU counterparts with respect to their ability to learn from
  synthetically generated data.
\end{abstract}


\section{Introduction}
\label{sec:intro}

A neural network is characterized by its architecture, the choices of
activation functions, and its parameters. We see several activation
functions in the literature -- the most common ones being threshold,
logistic, hyperbolic tangent and rectified linear units (\relus). In
recent years, deep neural networks with rectifying neurons -- defined
as $\rectifier{x} = \max(0, x)$ -- have shown state-of-the-art
performance in several tasks such as image and speech classification
\cite{Glorot2011,nair2010rectified,Krizhevsky2012,maas2013rectifier,zeiler2013rectified}.

\relus~ possess several attractive computational properties. First,
compared to deep networks with sigmoidal activation units, \relu{}
networks are less affected by the vanishing gradient
problem~\cite{bengio1994learning,hochreiter1998vanishing,Glorot2011}.
Second, rectifying neurons encourage sparsity in the hidden
layers~\cite{Glorot2011}. Third, gradient back propagation is
efficient because of the piece-wise linear nature of the function. For
example, \citet{Krizhevsky2012} report that a convolutional neural
network with \relus{} is six times faster than an equivalent one with
hyperbolic tangent neurons. Finally, they have been empirically been
shown to generalize very well.

Despite these clear computational and empirical advantages, the
expressiveness of rectifier units is less studied unlike sigmoid and
threshold units.
In this paper, we address the following question: {\em Which Boolean
  functions do \relu{} networks express?} We analyze the
expressiveness of shallow \relu{} networks by characterizing their
equivalent threshold networks. The goal of our analysis is to offer a
formal understanding for the successes of \relus{} by comparing them
to threshold functions that are well
studied~\citep[e.g.][]{hajnal1993threshold}. To this end, the
contributions of this paper are:

\begin{enumerate}
\item We provide a constructive proof that two layer \relu{} networks
  are equivalent to exponentially larger threshold networks.
  Furthermore, we show that there exist two layer \relu{} networks
  that {\em cannot} be represented by any smaller threshold networks.
\item We use this characterization to define a sufficient condition
  for compressing an arbitrary threshold network into a
  logarithmically smaller \relu{} network.
\item We identify a relaxation of this condition that is applicable if
  we treat hidden layer predictions as a multiclass classification,
  thus requiring equivalence of hidden layer states instead of the
  output state.
\end{enumerate}


\subsection{Expressiveness of Networks: Related Work}
\label{sec:previous}

From the learning point of view, the choice of an activation function
is driven by two related aspects: the expressiveness of a given
network using the activation function, and the computational
complexity of learning. Though this work studies the former, we
briefly summarize prior work along both these lines.

Any continuous function can be approximated to arbitrary accuracy with
only one hidden layer of sigmoid units~\cite{Cybenko1989}, leading to
neural networks being called ``universal approximators''. With two
layers, even discontinuous functions can be represented. Moreover, the
approximation error (for real-valued outputs) is insensitive to the
choice of activation functions from among several commonly used ones
\cite{DasGupta1993}, provided we allow the size of the network to
increase polynomially and the number of layers to increase by a
constant factor. Similarly, two layer threshold networks are capable
of representing any Boolean function. However, these are existence
statements; for a general target function, the number of hidden units
may be exponential in the input dimensionality.
\citet{maass1991computational,maass1994comparison} compare sigmoid
networks with threshold networks and point out that the former can be
more expressive than similar-sized threshold networks.



There has been some recent work that looks at the expressiveness of
feed-forward \relu{} networks. Because the rectifier function is
piece-wise linear, any network using only \relus{} can only represent
piece-wise linear functions. Thus, the number of linear partitions of
input space by the network can be viewed as a measure of its
expressiveness. \citet{Pascanu2014} and \citet{Montufar2014} show that
for the same number of \relus{}, a deep architecture can represent
functions with exponentially more linear regions than a shallow
architecture. While more linear regions indicate that more complex
functions can be represented, it does not directly tell us how
expressive a function is; at prediction time, we cannot directly
correlate the number of regions to the way we make the prediction.
Another way of measuring the expressiveness of a feed-forward networks
is by considering its classification error; \citet{Telgarsky2015}
compares shallow and deep \relu{} networks in this manner.

The learning complexity of neural networks using various activation
functions has also been studied. For inputs from the Boolean
hypercube, the two-layer networks with threshold activation functions
is not efficiently
learnable~\citep[e.g.][]{blum1992training,Klivans2006,Daniely2014}.
Without restricting the weights, two layer networks with sigmoid or
\relu{} activations are also not efficiently learnable. We also refer
the reader to \citet{livni2014computational} that summarizes and
describes positive and negative learnability results for various
activations.


\section{What do \relus~express?}
\label{sec:expressiveness}
To simplify our analysis, we primarily focus on shallow networks with
one hidden layer with $n$ units and a single binary output. In all
cases, the hidden layer neurons are the object of study. The output
activation function is always the threshold function.
In the rest of the paper, we use boldfaced letters to denote vectors.
Input feature vectors and output binary labels are represented by
$\bx$ and $y \in \{\pm 1\}$ respectively. The number of hidden units
is $n$. The weights and bias for the $k^{th}$ rectifier are $\bu_k$
and $b_k$; the weights and bias for the $k^{th}$ sign units are
$\bv_k$ and $d_k$. The weights for the output unit are $w_1$ through
$w_n$, and its the bias is $w_0$.

\subsection{Threshold networks}
\label{sec:threshold-expressiveness}

Before coming to the main results, we will first review the
expressiveness of threshold networks. Assuming there are $n$ hidden
units and one output unit, the output of the network can be written as
\begin{equation}
  y = \threshold{w_0 + \sum_{k=1}^n w_k \threshold{\bv_k \cdot \bx + d_k}}.
\end{equation}
Here, both hidden and output activations are the sign function, which
is defined as $\threshold{x}=1 \text{ if } x \ge 0$, and $-1$
otherwise. Each hidden unit represents one hyperplane (parameterized
by $\bv_k$ and $d_k$) that bisects the input space into two half
spaces. By choosing different weights in the hidden layer we can
obtain arbitrary arrangement of $n$ hyperplanes. The theory of
hyperplane arrangement~\cite{Zaslavsky1975} tells us that for a
general arrangement of $n$ hyperplanes in $d$ dimensions, the space is
divided into $\sum_{s=0}^{d} \binom{n}{s}$ regions. The output unit
computes a linear combination of the hidden output (using the $w$'s)
and thresholds it. Thus, for various values of the $w$'s, threshold
networks can express intersections and unions of those regions.
Figure~\ref{fig:sign_boundary} shows an example of the decision
boundary of a two-layer network with three threshold units in the
hidden layer.
\begin{figure}[htbp] 
  \centering
  \begin{tikzpicture}[scale=0.65]
  \draw[step=1cm,gray,very thin] (-3,-3) grid (3,3);

  \filldraw[fill=red!40, draw=none,fill opacity=0.5](-2,-3)--(-0.8571,-1.2857)--(0.3,-0.9)--(1,-3)--(-2,-3);
  \filldraw[pattern=crosshatch, pattern color=green, draw=none](-2,-3)--(-0.8571,-1.2857)--(0.3,-0.9)--(1,-3)--(3,-3)--(3,3)--(-3,3)--(-3,-3)--(-2,-3);

  \draw (-2,-3) -- (2, 3);
  \draw (-3,-2) -- (3,0);
  \draw (-1,3) -- (1,-3);

  \coordinate[label=left:$\bv_1$] (w1) at ($ (1, 1.5) ! 0.5cm ! 90:(2,3) $);
  \coordinate[label=left:$\bv_2$] (w2) at ($ (-2, -1.667) ! 0.5cm ! 90:(3,1) $);
  \coordinate[label=right:$\bv_3$] (w3) at ($ (0.667, -2) ! 0.5cm ! -90:(-1,3) $);

  \draw[arrows=->, very thick] (1,1.5) -- (w1); 
  \draw[arrows=->, very thick] (-2,-1.667) -- (w2); 
  \draw[arrows=->, very thick] (0.667,-2) -- (w3); 

\end{tikzpicture}

  \caption{An example of the decision boundary of a two-layer network
    in two dimensions, with three threshold units in the hidden layer.
    The arrows point towards the half-space that is classified as
    positive (the green checked region).}
  \label{fig:sign_boundary}

\end{figure}
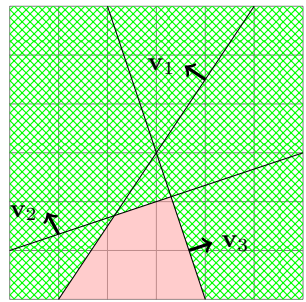

\subsection{Rectifier networks}
\label{subsec:relu-decisions}
In this section, we will show that the decision boundary of every
two-layer neural network with rectifier activations can be represented
using a network with threshold activations with two or three layers.
However, the number of hidden threshold units can be exponential
compared to the number of hidden rectifier units.

Consider a network with one hidden layer of $n$ \relus{}, denoted by
$\rectifier{\cdot}$. For a $d$ dimensional input $\bx$, the output $y$
is computed as
\begin{equation}\label{eq:general}
  y = \threshold{w_0 + \sum_{k=1}^n w_k \rectifier{\bu_k \cdot \bx + b_k}}.
\end{equation}
Here $\bu_k$ and $b_k$ are weight and bias parameters for the
\relus{} in the hidden layer, and the $w_k$'s parameterize the output
unit. To simplify notation, we will use $a_k$ to denote the the
pre-activation input of the $k^{th}$ hidden unit. That is,
$a_k(\bx) = \bu_k \cdot \bx + b_k$. This allows us to simplify the
output as
$\threshold{w_0 + \sum_{k \in [n]} w_k \rectifier{a_k(\bx)}}$. Here,
$[n]$ is the set of positive integers not more than $n$. Note that
even when not explicitly mentioned, each $a_k$ depends on the $\bu_k$
and the $b_k$ parameters.

By definition of the rectifier, for any real number $c$, we have
$c\rectifier{x} = \sgn(c)\rectifier{|c|x}$. Thus, we can absorb
$\vert w_k \vert$ into the rectifier function in
Eq.~\eqref{eq:general} without losing generality. That is, other than
$w_0$, all the other output layer weights are only relevant up to sign
because their magnitude can be absorbed into hidden layer weights. We
can partition the hidden units into two sets $\sP$ and $\sN$,
depending on the sign of the corresponding $w_k$. That is, let
$\sP = \{k: k \in [n] \text{ and } w_k = +1\}$ and let
$\sN = \{k: k \in [n] \text{ and } w_k = -1\}$. We will refer to these
partitions as the {\em positive} and {\em negative} hidden units
respectively.

This observation lets us state the general form of two-layer \relu{}
networks as:
\begin{equation}\label{eq:general_relu}
  y = \threshold{w_0 + \sum_{k \in \sP} \rectifier{a_k(\bx)} - \sum_{k \in \sN} \rectifier{a_k(\bx)}}.
\end{equation}
The following two layer rectifier network will serve as our
running example through the paper:
\begin{equation}\label{eq:example_relu}
  y = \threshold{w_0 + \rectifier{a_1(\bx)} - \rectifier{a_2(\bx)} - \rectifier{a_3(\bx)}}.
\end{equation}
This network consists of three \relus{} in the hidden layer, one of
which positively affects the pre-activation output and the other two
decrease it. Hence, the set $\sP=\{1\}$ and the set $\sN=\{2, 3\}$. 

Using the general representation of a two layer network with rectifier
hidden units (Eq. \eqref{eq:general_relu}), we can now state our main
theorem that analyzes the decision boundary of rectifier networks.
%

\begin{theorem}[Main Theorem]\label{theorem:main}
  Consider a two-layer rectifier network with $n$ hidden units
  represented in its general form (Eq.~\eqref{eq:general_relu}). Then,
  for any input $\bx$, the following conditions are equivalent:
  \begin{enumerate}
  \item The network classifies the example $\bx$ as
    positive. \label{theorem:main:condition1}
  \item There exists a subset $\sS_1$ of $\sP$ such that, for every
    subset $\sS_2$ of $\sN$, we have
    $w_0 + \sum_{k \in \sS_1} a_k(\bx) - \sum_{k \in \sS_2} a_k(\bx)
    \ge 0$. \label{theorem:main:condition2}
  \item For every subset $\sS_2$ of $\sN$, there exists a subset
    $\sS_1$ of $\sP$ such that
    $w_0 + \sum_{k \in \sS_1} a_k(\bx) - \sum_{k \in \sS_2} a_k(\bx)
    \ge 0$. \label{theorem:main:condition3}
  \end{enumerate}
\end{theorem}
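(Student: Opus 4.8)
The plan is to reduce all three conditions to a single scalar inequality built from the rectifier pre-activation. The crux is a pointwise identity for sums of rectified linear units: for any fixed $\bx$,
\[
  \sum_{k \in \sP} \rectifier{a_k(\bx)} = \max_{\sS_1 \subseteq \sP} \sum_{k \in \sS_1} a_k(\bx), \qquad \sum_{k \in \sN} \rectifier{a_k(\bx)} = \max_{\sS_2 \subseteq \sN} \sum_{k \in \sS_2} a_k(\bx),
\]
where each maximum is attained by the subset of indices with $a_k(\bx) \ge 0$ and the empty set contributes $0$. This holds because $\rectifier{a_k(\bx)} = \max(0, a_k(\bx))$ chooses each summand independently, so a sum of independent maxima equals the maximum over independent subset choices. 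Writing $M = \max_{\sS_1 \subseteq \sP} \sum_{k \in \sS_1} a_k(\bx)$ and $N = \max_{\sS_2 \subseteq \sN} \sum_{k \in \sS_2} a_k(\bx)$, the network pre-activation equals $w_0 + M - N$, so Condition~\ref{theorem:main:condition1} is equivalent to the scalar inequality $w_0 + M - N \ge 0$.

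Next I would translate Conditions~\ref{theorem:main:condition2} and~\ref{theorem:main:condition3} into this same inequality by reading each quantifier as a $\max$ or a $\min$. In Condition~\ref{theorem:main:condition2}, for a fixed $\sS_1$ the inner clause ``for every $\sS_2 \subseteq \sN$'' is equivalent to $w_0 + \sum_{k \in \sS_1} a_k(\bx) - N \ge 0$, since the worst case over $\sS_2$ subtracts exactly the maximum $N$; the outer ``there exists $\sS_1$'' then selects the best $\sS_1$, giving $w_0 + M - N \ge 0$. Symmetrically, in Condition~\ref{theorem:main:condition3} the inner ``there exists $\sS_1$'' replaces $\sum_{k \in \sS_1} a_k(\bx)$ by its maximum $M$, and the outer ``for every $\sS_2$'' forces the bound against the worst $\sS_2$, i.e.\ against $N$, again yielding $w_0 + M - N \ge 0$.

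The deeper reason both quantifier orders collapse to the same inequality is that the objective $w_0 + \sum_{k \in \sS_1} a_k(\bx) - \sum_{k \in \sS_2} a_k(\bx)$ is additively separable in $\sS_1$ and $\sS_2$, so there is no minimax gap and $\max_{\sS_1}\min_{\sS_2}$ and $\min_{\sS_2}\max_{\sS_1}$ both equal $w_0 + M - N$. I expect the main obstacle to be bookkeeping rather than logic: stating the subset-maximum identity cleanly, and correctly tracking the sign flip whereby the universally quantified negative block $-\sum_{k \in \sS_2} a_k(\bx)$ is smallest precisely when $\sum_{k \in \sS_2} a_k(\bx)$ is largest, so that ``for all $\sS_2$'' corresponds to subtracting $N$ rather than $0$. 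Once the identity and this sign bookkeeping are in place, I would finish by chaining the equivalences, establishing $\ref{theorem:main:condition1} \Leftrightarrow (w_0 + M - N \ge 0) \Leftrightarrow \ref{theorem:main:condition2}$ and likewise for Condition~\ref{theorem:main:condition3}.
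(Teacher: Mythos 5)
Your proof is correct and rests on the same core observation as the paper's: the subset of units with nonnegative pre-activation attains the relevant extremum, so it serves as the witness for both quantifier orders. The paper carries this out as four separate implication chains using the explicit witness sets $\sS_1^* = \{k \in \sP : a_k(\bx) \ge 0\}$ and $\sS_2^* = \{k \in \sN : a_k(\bx) \ge 0\}$, whereas you package the identical facts as the identity $\sum_{k} \rectifier{a_k(\bx)} = \max_{\sS} \sum_{k \in \sS} a_k(\bx)$ together with the absence of a minimax gap for a separable objective; this is a tidier, unified presentation of the same argument rather than a different one.
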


Before discussing the implications of the theorem, let us see how it
applies to our running example in Eq.~\eqref{eq:example_relu}. In this
example, $\sP$ has two subsets: $\emptyset$ and $\{1\}$, and $\sN$ has
four subsets: $\emptyset$, $\{2\}$, $\{3\}$ and $\{2, 3\}$. The first
and second conditions of Theorem~\ref{theorem:main} indicate that the
prediction is positive if, and only if, at least one of the sets of
conditions in Figure~\ref{fig:conditions-for-positive} hold in
entirety.
\begin{figure*}[t]
  \centering
  \footnotesize
  $\begin{cases}
      w_0 \ge 0,                                 & (\mbox{with }\sS_1=\emptyset, \sS_2=\emptyset) \\
      w_0 - a_2(\bx) \ge 0,                      & (\mbox{with }\sS_1=\emptyset, \sS_2=\{2\})     \\
      w_0 - a_3(\bx) \ge 0,                      & (\mbox{with }\sS_1=\emptyset, \sS_2=\{3\})     \\
      w_0 - a_2(\bx) - a_3(\bx) \ge 0.           & (\mbox{with }\sS_1=\emptyset, \sS_2=\{2, 3\})
    \end{cases}$~(or)~$
    \begin{cases}
      w_0 + a_1(\bx)\ge 0,                       & (\mbox{with }\sS_1=\{1\}, \sS_2=\emptyset)     \\
      w_0+ a_1(\bx) - a_2(\bx) \ge 0,            & (\mbox{with } \sS_1=\{1\}, \sS_2=\{2\})        \\
      w_0+ a_1(\bx) - a_3(\bx) \ge 0,            & (\mbox{with } \sS_1=\{1\},\sS_2=\{3\})         \\
      w_0+ a_1(\bx) - a_2(\bx) - a_3(\bx) \ge 0. & (\mbox{with } \sS_1=\{1\},\sS_2=\{2, 3\})
    \end{cases}$                   
  \caption{The sets of inequalities that should hold for the running
    example to predict an input as positive.}
  \label{fig:conditions-for-positive}
\end{figure*}

Each big left brace indicates a system of inequalities all of which
should hold; thus essentially the conjunction of the individual
inequalities contained within it. We can interpret of the subsets of
$\sP$ as certificates. In order for the output of
Eq.~\eqref{eq:example_relu} to be positive, we need at least one
certificate $\sS_1$ (one subset of $\sP$) such that for every subset
$\sS_2$ of $\sN$,
$w_0 + \sum_{k \in \sS_1} a_k(\bx) - \sum_{k \in \sS_2} a_k(\bx) \ge
0$.
The two sets of inequalities show the choices of subsets of $\sN$ for
each of the two possible choices of $\sS_1$ (i.e. either $\emptyset$
or $\{1\}$). The above conditions represent a disjunction of
conjunctions.

Similarly, employing the first and third conditions of the theorem to
our running example gives us:
{\footnotesize 
  \begin{equation}
    \begin{cases}
      w_0 \ge 0,                       & \text{or $w_0 + a_1(\bx)\ge 0$}            \\
      w_0 - a_2(\bx) \ge 0,            & \text{or $w_0+ a_1(\bx) - a_2(\bx) \ge 0$} \\
      w_0 - a_3(\bx) \ge 0,            & \text{or $w_0+ a_1(\bx) - a_3(\bx) \ge 0$} \\
      w_0 - a_2(\bx) - a_3(\bx) \ge 0, & \text{or $w_0+ a_1(\bx) - a_2(\bx)$}                 \\
                                       & \text{\hspace{0.8cm}$- a_3(\bx) \ge 0$}
    \end{cases}
  \end{equation}
} 
Note that unlike the previous case, this gives us a condition that
is a conjunction of disjunctions.

The complete proof of Theorem~\ref{theorem:main} is given in the
supplementary material; here we give a sketch. To prove that
condition~\ref{theorem:main:condition1} implies
condition~\ref{theorem:main:condition2} we construct a specific subset
$\sS_1^*$ of $\sP$, where
$\mathcal{S}_1^* = \{k: k \in \mathcal{P} \text{ and } a_k(\mathbf{x})
\ge 0\}$ and this $\sS_1^*$ has the desired property. To prove
condition~\ref{theorem:main:condition2} implies
condition~\ref{theorem:main:condition1} we make use of a specific
subset $\sS_2^*$ of $\sN$, where
$\mathcal{S}_2^* = \{k: k \in \mathcal{N} \text{ and } a_k(\mathbf{x})
\ge 0\}$, to show the example $\bx$ has a positive label. That
condition~\ref{theorem:main:condition1} implies
condition~\ref{theorem:main:condition3} is a direct result of
condition~\ref{theorem:main:condition1} implying
condition~\ref{theorem:main:condition2}. Finally, to prove
condition~\ref{theorem:main:condition3} implies
condition~\ref{theorem:main:condition1} we use the same $\sS_2^*$
again to show $\bx$ has a positive label.

\paragraph{Discussion.} 
The only difference between the second and the third conditions of the
theorem is the order of the universal and existential quantifiers over
the positive and negative hidden units, $\sP$ and $\sN$ respectively.
More importantly, in both cases, the inequality condition over the
subsets $\sS_1$ and $\sS_2$ is identical. Normally, swapping the order
of the quantifiers does not give us an equivalent statement; but here,
we see that doing so retains meaning because, in both cases, the
output is positive for the corresponding input.

For any subsets $\sS_1\subseteq \sP$ and $\sS_2 \subseteq \sN$, we can
write the inequality condition as a Boolean function
$B_{\sS_1, \sS_2}$:
\begin{equation}\label{eq:boolean_var}
  B_{\sS_1, \sS_2}(\bx) = 
  \begin{cases}
    \text{true}, & w_0 + \sum_{k \in \sS_1} a_k(\bx) \\
    & - \sum_{k \in \sS_2} a_k(\bx) \ge 0 \\
    \text{false}, & w_0 + \sum_{k \in \sS_1} a_k(\bx)\\
    & - \sum_{k \in \sS_2} a_k(\bx) < 0
  \end{cases}
\end{equation}
If the sizes of the positive and negative subsets are $n_1$ and $n_2$
respectively (i.e, $n_1 = |\sP|$ and $n_2 = |\sN|$), then we know that
$\sP$ has $2^{n_1}$ subsets and $\sN$ has $2^{n_2}$ subsets. Thus,
there are $2^{n_1 + n_2}$ such Boolean functions. Then, by virtue of
conditions 1 and 2 of theorem~\ref{theorem:main}, we have\footnote{We
  write $y$ as a Boolean with $y=1$ and $y=-1$ representing true and
  false respectively.}
\begin{equation*}
  y = \lor_{\sS_1 \subseteq \sP} \pb{\land_{\sS_2 \subseteq \sN} B_{\sS_1, \sS_2}(\bx)},
\end{equation*}
where $\land_{\sS_2}$ indicates a conjunction over all different
subsets $\sS_2$ of $\sN$, and $\lor_{\sS_1}$ indicates a disjunction
over all different subsets $\sS_1$ of $\sP$. This expression is in the
disjunctive normal form (DNF), where each conjunct contains $2^{n_2}$
$B$'s and there are $2^{n_1}$ such terms. Since each $B$ simplifies
into a hyperplane in the input space, this characterizes the decision
boundary of the \relu{} network as a DNF expression over these
hyperplane decisions.

Similarly, by conditions 1 and 3, we have
%
$y = \land_{\sS_2} \bigl[\lor_{\sS_1} B_{\sS_1, \sS_2}(\bx)\bigr]$.
%
This is in the conjunctive normal form (CNF), where each disjunctive
clause contains $2^{n_1}$ Boolean values and there are $2^{n_2}$  such
clauses.

An corollary is that if the hidden units of the \relu{} network are
all positive (or negative), then the equivalent threshold network is a
pure disjunction (or conjunction).


\section{Comparing \relu{} and threshold networks}
\label{sec:transformations}

In the previous section, we saw that \relu{} networks can express
Boolean functions that correspond to much larger threshold networks.
Of course, threshold activations are not generally used in
applications; nonetheless, they are well understood theoretically {\em
  and} they emerge naturally as a result of the analysis above. Using
threshold functions as a vehicle to represent the decision boundaries
of \relu{} networks, naturally leads to two related questions that we
will address in this section. First, given an arbitrary \relu{}
network, can we construct an equivalent threshold network? Second,
given an arbitrary threshold network, how can we represent it using
\relu{} network?

\subsection{Converting from \relu{} to Threshold}
\label{subsec:relu-to-threshold}

Theorem~\ref{theorem:main} essentially gives us a constructive way to
represent an arbitrary two layer \relu{} network given in
Eq.~\eqref{eq:general_relu} as a three-layer threshold network.
For every choice of the subsets $\sS_1$ and $\sS_2$ of the positive
and negative units, we can define a Boolean function
$B_{\sS_1, \sS_2}$ as per Eq.~\eqref{eq:boolean_var}.
By definition, each of these is a threshold unit, giving us
$2^{n_1+n_2}$ threshold units in all. (Recall that $n_1$ and $n_2$ are
the sizes of $\sP$ and $\sN$ respectively.) Since the decision
function is a CNF or a DNF over these functions, it can be represented
using a two-layer network over the $B$'s, giving us three layers in
all. We put all $2^{n_1+n_2}$ threshold units in the first hidden
layer, separated into $2^{n_1}$ groups, with each group comprising of
$2^{n_2}$ units.

Figure \ref{fig:dnf} shows the threshold network corresponding to our
running example from Eq. (\ref{eq:example_relu}). For brevity, we use
the notation $B_{i,j}$ to represent the first hidden layer, with $i$
and $j$ indexing over the subsets of $\sP$ and $\sN$ respectively. The
$B$'s can be grouped into two groups, with units in each group sharing
the same subset $\sS_1$ but with different $\sS_2$. Note that, these
nodes are linear threshold units corresponding to the inequalities in
in Fig. \ref{fig:conditions-for-positive}.
In the second hidden layer, we have one threshold unit connected to
each group of units in the layer below. The weight for each connection
unit is 1 and the bias is $2^{n_2}-1$, effectively giving us a
conjunction of the previous layer nodes. The second hidden layer has
$2^{n_1}$ such units. Finally, we have one unit in the output layer,
with all weights being $1$ and bias being $1-2^{n_1}$, simulating a
disjunction of the decisions of the layer below. As discussed in the
previous section, we can also construct a threshold networks using
CNFs, with $2^{n_1+n_2}$ units in the first hidden layer, and
$2^{n_2}$ units in the second hidden layer.
\begin{figure*}[ht]
  \centering
  \begin{tikzpicture}[scale=0.95]
  
  \node[draw=black, rectangle, minimum height = 0.8cm, minimum width=13cm, anchor=south west] (input)at (0,0) {Input vector $\bx$};
  
  \foreach \i in {0, 1, 2, 3} {
    \pgfmathsetmacro\x{\i * 1.5 + 1}

    \node[draw=black,circle,minimum size=1cm](b0\i) at(\x,2.5){{\footnotesize $B_{0, \i}$}};

    \pgfmathsetmacro\x{\i * 1.5 + 7.5}

    \node[draw=black,circle,minimum size=1cm](b1\i) at(\x,2.5){{\footnotesize $B_{1, \i}$}};
   }

  \node[draw=black, circle, left of=b00, yshift=1cm] (b0) {{\footnotesize 1}};
  \node[draw=black, circle, right of=b13, yshift=1cm] (b1)  {{\footnotesize 1}};

  \node[draw=black, circle, minimum size=1cm] (c0) at ([yshift=2cm]$(b00)!0.5!(b03)$) {{\footnotesize $C_0$}};
  \node[draw=black, circle, minimum size=1cm] (c1) at ([yshift=2cm]$(b10)!0.5!(b13)$) {{\footnotesize $C_1$}};
  \node[draw=black, circle, right of=c1, yshift=1cm] (b) {{\footnotesize 1}};

  \foreach \i in {0,1, 2, 3} {
    \draw[->, thick] (b0\i) -- (c0); 
    \draw[->, thick] (b1\i) -- (c1); 
  }
  \draw[->, thick] (b0) -- (c0); 
  \draw[->, thick] (b1) -- (c1); 

  \node[draw=black, circle, minimum size=1cm] (y) at ([yshift=2cm]$(c0)!0.5!(c1)$){$y$};

  \draw[->, thick] (c0) -- (y); 
  \draw[->, thick] (c1) -- (y); 
  \draw[->, thick] (b) -- (y);  

  \draw[gray] ([xshift=-0.3cm, yshift=-0.2cm]b00.south west) rectangle ([xshift=0.3cm, yshift=0.2cm]b03.north east);
  \draw[gray] ([xshift=-0.3cm, yshift=-0.2cm]b10.south west) rectangle ([xshift=0.3cm, yshift=0.2cm]b13.north east);

  \draw[->, very thick] (input.north) -- ([yshift=-0.6cm]$(b01)!0.5!(b02)$); 
  \draw[->, very thick] (input.north) -- ([yshift=-0.6cm]$(b11)!0.5!(b12)$); 

  \node[draw=gray, dashed] at (6.5, 1.5) {Fully connected};

  \node[draw=gray, dashed, anchor=north west] at ([yshift=-0.2cm,
  xshift=-0.3cm]b00.south west) {{\footnotesize $\sS_1 = \emptyset$
      (See Fig. \ref{fig:conditions-for-positive}, left)}};

  \node[draw=gray, dashed, anchor=north east] at ([xshift=0.3cm, yshift=-0.2cm]b13.south east) {{\footnotesize $\sS_1 = \{1\}$ (See Fig. \ref{fig:conditions-for-positive}, right)}};

  \node[draw=gray, dashed] (conjLabel)at ([yshift=-0.5cm]$(c0)!0.5!(c1)$) {Conjunction units};
  \draw[dashed, ->] (conjLabel.west) -- (c0);
  \draw[dashed, ->] (conjLabel.east) -- (c1);

  \node[draw=gray, dashed, left of=y, xshift=-2cm] (disjLabel) {Disjunction unit};
  \draw[dashed, ->] (disjLabel.east) -- (y);

\end{tikzpicture}

  \caption{A threshold network corresponding to the running example.
    The dotted boxes label the various components of the network. See
    the text above for details. }
  \label{fig:dnf}
\end{figure*}
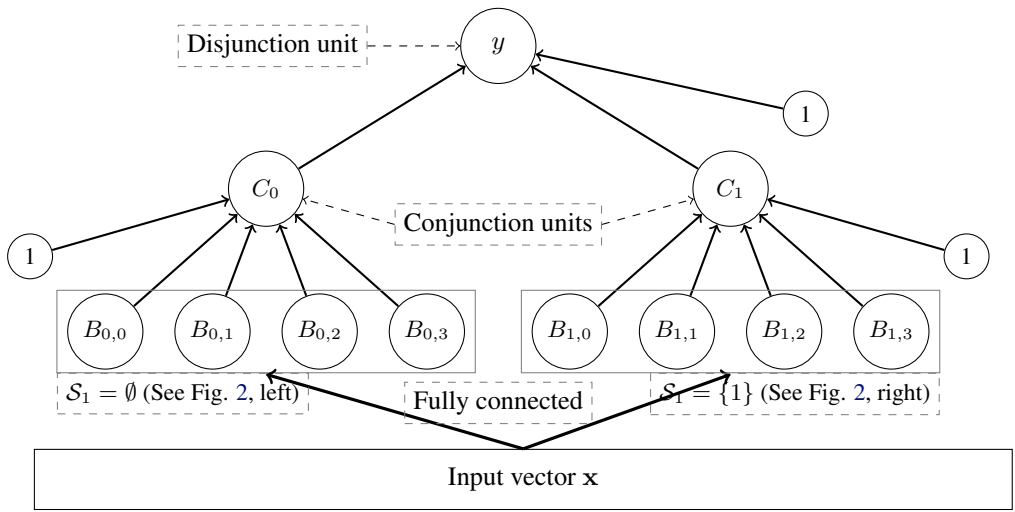

\subsection{Boolean Expressiveness of \relu{} Networks}
\label{sec:boolean-expressiveness}

Our main theorem shows that for an arbitrary two-layer ReLU network,
we \emph{can always} represent it using a three-layer threshold
network in which the number of threshold units is exponentially more
than the number of ReLUs. However, this does not imply that actually
need that many thresholds units. A natural question to ask is: can we
represent the same ReLU network without the expense of exponential
number of threshold units? In general, the answer is no, because there
are families of ReLU network that need at least an exponential number
of threshold units to represent.

We can formalize this for the case of \relu{} networks where the
hidden nodes are either all positive or negative -- that is, either
$\sP$ or $\sN$ is the empty set. We restrict ourselves to this set
because we can represent such networks using a two-layer threshold
network rather than the three-layer one in the previous section.
We will consider the set of all Boolean functions in $d$ dimensions
expressed by such a \relu{} network with $n$ hidden units. Let
$\Gamma_R(n, d)$ represent this set. Similarly, let $\Gamma_T(n, d)$
denote the set of such functions expressed by threshold networks with
$n$ hidden units. We can summarize the Boolean expressiveness of
\relu{} networks via the following two-part theorem:

\begin{theorem}\label{thm:boolean-expressiveness}
  For rectifier networks with $n > 1$ hidden units, all of which are
  either positive or negative, and for all input dimensionalities $d\ge n$,
  we have
  \begin{enumerate}
  \item $\Gamma_R(n, d) \subseteq \Gamma_T(2^n-1, d)$, and,
  \item $\Gamma_R(n, d) \not\subset \Gamma_T(2^n-2, d)$.
  \end{enumerate}
\end{theorem}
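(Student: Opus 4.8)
The plan is to treat the two parts separately: Part~1 is an upper bound that follows almost immediately from the corollary to Theorem~\ref{theorem:main}, while Part~2 is a matching lower bound requiring an explicit construction together with a geometric argument about decision boundaries.

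For Part~1 I would invoke the corollary. When every hidden unit is positive ($\sN=\emptyset$), conditions~\ref{theorem:main:condition1} and~\ref{theorem:main:condition2} of Theorem~\ref{theorem:main} collapse (the only subset of $\sN$ is $\emptyset$) to the statement that $\bx$ is classified positive iff $\lor_{\sS_1\subseteq\sP}B_{\sS_1,\emptyset}(\bx)$, a disjunction of the $2^n$ halfspaces $w_0+\sum_{k\in\sS_1}a_k(\bx)\ge 0$. The key observation is that the empty subset contributes the constant clause $B_{\emptyset,\emptyset}=[w_0\ge 0]$: if $w_0\ge 0$ this clause is identically true and the network computes the constant function, and otherwise it is identically false and may be dropped, leaving a disjunction over the $2^n-1$ nonempty subsets. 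Either way the function is a two-layer threshold network (a disjunction unit atop at most $2^n-1$ threshold units), so it lies in $\Gamma_T(2^n-1,d)$. The all-negative case is symmetric, yielding a conjunction of $2^n-1$ halfspaces via conditions~\ref{theorem:main:condition1} and~\ref{theorem:main:condition3}.

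For Part~2 I would exhibit a single function in $\Gamma_R(n,d)$ that no threshold network with $2^n-2$ hidden units can compute. Using $d\ge n$, I would take all $n$ units positive with $a_k(\bx)=x_k$ and $w_0=-1$; by Part~1 this network computes $y=\lor_{\emptyset\ne\sS_1\subseteq[n]}[\sum_{k\in\sS_1}x_k\ge 1]$, whose positive region is $\{f(\bx)\ge 1\}$ for the convex piecewise-linear function $f(\bx)=\max_{\emptyset\ne\sS_1}\sum_{k\in\sS_1}x_k$. On the orthant where $x_k>0$ exactly for $k\in\sS_1$, the strict maximizing subset is $\sS_1$, so the boundary $\{f=1\}$ contains a full $(d-1)$-dimensional face on the hyperplane $\sum_{k\in\sS_1}x_k=1$; as $\sS_1$ ranges over all $2^n-1$ nonempty subsets these faces lie on $2^n-1$ distinct hyperplanes (distinct since their normals $\mathbf{1}_{\sS_1}$ differ). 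The lower bound then rests on a one-line fact about two-layer threshold networks: since $\threshold{w_0+\sum_k w_k\threshold{\bv_k\cdot\bx+d_k}}$ is constant on each cell of the arrangement of its $m$ hidden hyperplanes, its decision boundary lies in the union of those $m$ hyperplanes. If such a network computed our function, its $m$ hyperplanes would have to cover all $2^n-1$ faces above; because two distinct hyperplanes meet in dimension at most $d-2$, each $(d-1)$-dimensional face forces its own hyperplane into the network, giving $m\ge 2^n-1>2^n-2$, so the function is not in $\Gamma_T(2^n-2,d)$.

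The hard part will be Part~2, and within it the crux is the \emph{necessity} claim: verifying that every nonempty $\sS_1$ is the strict argmax of $f$ on a full-dimensional region meeting the level set $\{f=1\}$, so that all $2^n-1$ hyperplanes genuinely appear as $(d-1)$-dimensional faces of the boundary and are therefore points at which the classification actually flips. Once this is secured, the passage from ``boundary covered by $m$ hyperplanes'' to ``$m\ge 2^n-1$'' is routine, as is the reduction from $2^n$ to $2^n-1$ clauses in Part~1.
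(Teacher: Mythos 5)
Your proposal is correct and follows essentially the same route as the paper: Part~1 as a direct consequence of the DNF characterization from Theorem~\ref{theorem:main} (with the empty-subset clause accounting for the drop from $2^n$ to $2^n-1$), and Part~2 via the very same witness network $\threshold{-1+\sum_k \rectifier{x_k}}$, showing that each of the $2^n-1$ hyperplanes $\sum_{k\in\sS}x_k=1$ contributes a genuine $(d-1)$-dimensional piece of the decision boundary and hence must appear as a hidden unit. Your dimension-counting justification of the final step (a $(d-1)$-dimensional face cannot be covered by other hyperplanes) is a slightly more explicit version of the paper's assertion that each face of the polytope requires its own threshold unit, and your orthant argmax argument replaces the paper's explicit witness points $1/s<x_i<1/(s-1)$, but these are minor variations of the same argument.
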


Before seeing the proof sketch, let us look at an intuitive
explanation of this result. This theorem tells us that the upper bound
on the number of threshold units corresponding to the \relu{} network
is a tight one. In other words, not only can \relu{} networks express
Boolean functions that correspond to much larger threshold networks,
there are some \relu{} networks that can {\em only} be expressed by
such large networks! This theorem may give us some intuition into the
successes of \relu{} networks.

Note, however, that this theorem does not resolve the question of what
fraction of all ReLU networks can be expressed using fewer than
exponential number of hidden threshold units. Indeed, if the inputs
were only Boolean and the output was allowed to be real-valued, then
\citet[][theorem 6]{Martens2013} show that a two layer ReLU network
can be simulated using only a quadratic number of threshold units. (In
contrast, Theorem~\ref{thm:boolean-expressiveness} above considers the
case of real-valued inputs, but Boolean outputs.)

The proof is based on Theorem~\ref{theorem:main}. From
Theorem~\ref{theorem:main}, we know the decision boundary of the
rectifier network with $n$ positive ReLUs (with $\sN$ being the empty
set) can be determined from $2^n-1$ hyperplanes. In other words, the
region of the input space that is classified as negative is the
polytope that is defined by the intersection of these $2^n-1$
half-planes. We can show by construction that there are rectifier
networks for which the negative region is a polytope defined by
$2^n -1$ bounding surfaces, thus necessitating {\em each} of the
$2^n-1$ half-planes as predicted by the main theorem, irrespective of
how the corresponding threshold network is constructed. In other
words, the number of threshold units cannot be reduced. The proof for
the case of all negative ReLUs is similar. The complete proof of the
theorem is given in the supplementary material.

\subsection{Converting Thresholds to \relus}
\label{subsec:threshold-to-relu}

So far, we have looked at threshold networks corresponding to \relu{}
networks. The next question we want to answer is under what condition
we can use \relus{} to represent the same decision boundary as a
threshold network. In this section, we show a series of results that
address various facets of this question. The longer proofs are in the
appendices.

First, with no restrictions in the number of \relus{} in the hidden
layer, then we can always construct a rectifier network that is
equivalent to a threshold network. In fact, we have the following
lemma:
\begin{lemma}\label{lemma:double_relu}
  Any  threshold network with $n$ units can be approximated to
  arbitrary accuracy by a rectifier network with $2n$ units.
\end{lemma}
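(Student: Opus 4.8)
The plan is to replace each threshold hidden unit by a pair of rectifier units that together implement a steep, clamped-linear approximation of the sign function, leaving the single threshold output unit untouched. Since each of the $n$ hidden threshold units is simulated by exactly two rectifiers, the resulting network has one hidden layer of $2n$ rectifiers followed by a threshold output, matching the claimed bound.

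The key ingredient is an exact algebraic identity: the clamped-linear (``hard sign'') function is a difference of two rectifiers. For a scale parameter $\gamma > 0$ and a real argument $z$, define
\[
  g_\gamma(z) = \rectifier{\gamma z + 1} - \rectifier{\gamma z - 1} - 1 .
\]
A routine three-case analysis (on $\gamma z \le -1$, $-1 \le \gamma z \le 1$, and $\gamma z \ge 1$) shows that $g_\gamma(z) = -1$, $g_\gamma(z) = \gamma z$, and $g_\gamma(z) = 1$ on these ranges respectively. Consequently $g_\gamma(z) = \threshold{z}$ \emph{exactly} whenever $|z| \ge 1/\gamma$, while $|g_\gamma(z)| \le 1$ everywhere. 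The two rectifiers in $g_\gamma$ share the same input direction and differ only in their bias, so each threshold unit is indeed simulated by two rectifier units.

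I would then assemble the approximating network. For each hidden unit $k \in [n]$ with parameters $\bv_k$ and $d_k$, I introduce the two rectifiers defining $g_\gamma(\bv_k \cdot \bx + d_k)$ and form the output
\[
  \widetilde{y} = \threshold{w_0' + \sum_{k=1}^{n} w_k\, g_\gamma(\bv_k \cdot \bx + d_k)}, \qquad w_0' = w_0 - \sum_{k=1}^{n} w_k ,
\]
where the new bias $w_0'$ absorbs the constant $-1$ contributed by each $g_\gamma$. This is a valid two-layer rectifier network with $2n$ hidden units. To bound the error, note that whenever $\bx$ satisfies $|\bv_k \cdot \bx + d_k| \ge 1/\gamma$ for all $k$, the identity above gives $g_\gamma(\bv_k \cdot \bx + d_k) = \threshold{\bv_k \cdot \bx + d_k}$ for every $k$, so the argument of the output threshold coincides with that of the original threshold network and $\widetilde{y}(\bx)$ equals the threshold network's output. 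The two networks can therefore disagree only on the union of $n$ slabs $\{\,|\bv_k \cdot \bx + d_k| < 1/\gamma\,\}$, each of width $2/\gamma$; within any bounded region this union has Lebesgue measure tending to $0$ as $\gamma \to \infty$, so for any target accuracy $\epsilon$ we may pick $\gamma$ large enough that the networks agree outside a set of measure at most $\epsilon$.

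The delicate point --- and the one I would state carefully --- is the meaning of ``arbitrary accuracy.'' Because the output is a hard threshold, the approximation is not continuous; rather it is \emph{exact} off a thin exceptional region whose measure vanishes with $\gamma$, which is the appropriate sense here. I expect this measure bookkeeping, rather than the algebra, to be the only real obstacle. If one allows the threshold network to be deep (more than one hidden layer), the same two-for-one replacement applies layerwise, but I would then need an inductive argument controlling error propagation: the clamped values fed into the next layer equal the true $\pm 1$ sign values exactly off the shrinking slabs, so off a set of small total measure every layer reproduces the original computation and the output is unchanged.
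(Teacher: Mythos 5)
Your construction is the paper's own, up to reparameterization: with $\gamma = 1/\epsilon$ and the positive-homogeneity of the rectifier, $\rectifier{\gamma z + 1} - \rectifier{\gamma z - 1} - 1 = \tfrac{1}{\epsilon}\bigl[\rectifier{z+\epsilon} - \rectifier{z-\epsilon}\bigr] - 1$, which is exactly the two-rectifier identity the paper uses to simulate each sign unit. Your version is actually more complete than the paper's one-line proof, since you also absorb the constants into the output bias and make precise the sense in which the approximation holds (exact agreement off slabs of vanishing measure).
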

\begin{proof}
Consider a threshold unit with weight vector $\mathbf{v}$ and bias $d$, we have
\begin{equation*}
  \text{sgn}(\mathbf{v} \cdot \mathbf{x} + d) \simeq \frac{1}{\epsilon} \bigl[R(\mathbf{v} \cdot \mathbf{x} + d + \epsilon) - R(\mathbf{v} \cdot \mathbf{x} + d - \epsilon)\bigr] - 1.
\end{equation*}
where $\epsilon$ is an arbitrary small number which determines the
approximation accuracy.
\end{proof}
This result is akin to the simulation results from
\citep{maass1994comparison} that compares threshold and sigmoid networks.

Given the exponential increase in the size of the threshold network to
represent a \relu{} network (Theorem
\ref{thm:boolean-expressiveness}), a natural question is whether we
can use only logarithmic number of \relus{} to represent any arbitrary
threshold network. In the general case, the following lemma points out
that this is not possible.
\begin{lemma}\label{lemma:counter_example}
  There exists a two-layer network with $n$ hidden threshold units for
  which it is not possible to construct an equivalent two-layer
  \relu{} network with fewer number of hidden units.
\end{lemma}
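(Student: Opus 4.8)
The plan is to \emph{exhibit} a specific threshold network with $n$ hidden units whose Boolean function forces any equivalent rectifier network to use at least $n$ hidden \relus{}. I would take the target to be the indicator of the nonnegative orthant in $d=n$ dimensions: an input $\bx$ is labelled positive exactly when $x_i \ge 0$ for all $i \in [n]$. This is realized by a two-layer threshold network with $n$ hidden units $\threshold{x_i}$ whose output unit computes their conjunction, so the positive region $P$ is the intersection of the $n$ half-spaces $\{x_i \ge 0\}$ and therefore has exactly $n$ facets, with outward normals $\mathbf{e}_1, \dots, \mathbf{e}_n$ that are linearly independent.

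Next I would suppose, for contradiction, that some two-layer \relu{} network in the general form of Eq.~\eqref{eq:general_relu}, with $m < n$ hidden units and preactivations $a_1, \dots, a_m$, computes the same function. Writing $g(\bx) = w_0 + \sum_{k\in\sP} \rectifier{a_k(\bx)} - \sum_{k\in\sN}\rectifier{a_k(\bx)}$, its positive region $\{g \ge 0\}$ must equal $P$. The structural fact I would use (the geometric content behind Theorem~\ref{theorem:main}) is that $g$ is a continuous piecewise-linear function whose kinks lie only on the $m$ hyperplanes $H_k = \{a_k(\bx) = 0\}$; on each cell of the arrangement of the $H_k$, $g$ is affine with gradient a $\{-1,0,+1\}$-combination $\sum_{k\in A}\bu_k - \sum_{k\in B}\bu_k$ of the hidden weight vectors, and hence an element of $\mathrm{span}(\bu_1,\dots,\bu_m)$, a space of dimension at most $m$.

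The heart of the argument is a local analysis at each facet. For each $i$ I would choose a generic point $p$ in the relative interior of the facet $\{x_i = 0\}\cap P$ that lies on no kink hyperplane. Because $\{g \ge 0\} = P$, the function $g$ vanishes on the facet and is affine in a neighborhood of $p$, so its zero set there is exactly the hyperplane $\{x_i = 0\}$ and its gradient is nonzero and proportional to $\mathbf{e}_i$. By the structural fact this gradient lies in $\mathrm{span}(\bu_1, \dots, \bu_m)$, so $\mathbf{e}_i \in \mathrm{span}(\bu_1, \dots, \bu_m)$ for every $i$. Since the $\mathbf{e}_i$ are linearly independent, the span has dimension at least $n$, forcing $m \ge n$ and contradicting $m < n$.

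The main obstacle I anticipate is ruling out degeneracies in this local analysis. I must guarantee that a facet point exists avoiding every extraneous kink hyperplane: each $H_k$ meets the facet in a set of strictly lower dimension \emph{unless} $H_k$ coincides with the facet hyperplane $\{x_i = 0\}$, in which case $\bu_k$ is itself proportional to $\mathbf{e}_i$ and the conclusion $\mathbf{e}_i \in \mathrm{span}(\bu_1,\dots,\bu_m)$ holds directly; so a measure/genericity argument disposes of this point. I must also confirm that $g$ genuinely changes sign across the facet, so that the gradient is nonzero and normal to $\{x_i = 0\}$; this follows because $g \ge 0$ on the closed orthant and $g < 0$ immediately outside it. A minor remaining point is to match tie-breaking so that $\{g \ge 0\}$ equals the \emph{closed} orthant, which is immediate since both the \relu{} output and $\threshold{\cdot}$ assign the positive label at equality.
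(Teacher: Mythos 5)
Your proof is correct, and it reaches the same linear-algebraic punchline as the paper's proof --- the $n$ linearly independent facet normals of an orthant must all lie in the span of the $m$ hidden weight vectors $\bu_1,\dots,\bu_m$, forcing $m \ge n$ --- but it gets there by a noticeably different and, in one respect, tighter route. The paper uses the disjunction network $\threshold{n-1+\sum_i \threshold{x_i}}$ (so the negative region is the negative orthant), invokes Theorem~\ref{theorem:main} to say that the \relu{} network's decision boundary is carved out of the $2^m$ hyperplanes $w_0+\sum_{k\in\sS_1}a_k(\bx)-\sum_{k\in\sS_2}a_k(\bx)=0$, and then simply asserts that the $n$ coordinate hyperplanes must appear among them ($H_1\subseteq H_2$). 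You use the conjunction network instead (an immaterial difference, since both regions are orthants with $n$ independent facet normals) and, rather than citing Theorem~\ref{theorem:main}, work directly with the piecewise-linear pre-activation $g$: on each cell of the arrangement of the kink hyperplanes its gradient is a signed subset-sum of the $\bu_k$, and at a generic relative-interior point of each facet that gradient must be a nonzero multiple of $\mathbf{e}_i$. This local analysis is precisely the justification that the paper's ``$H_1 \subseteq H_2$'' step leaves implicit, and you additionally handle the degeneracies the paper does not mention --- choosing facet points off all extraneous kink hyperplanes, the case where a kink hyperplane coincides with a facet hyperplane, and the nonvanishing of the gradient across the facet. So your argument is more elementary and self-contained where the paper's leans on the main theorem, at the cost of a somewhat longer genericity discussion; both establish the lemma.
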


We provide such an example with $n$ hidden threshold units in the
supplementary material. This lemma, in conjunction with Theorem
\ref{thm:boolean-expressiveness} effectively points out that by
employing a rectifier network, we are exploring a {\em subset} of much
larger threshold networks.

Furthermore, despite the negative result of the lemma, in the general
case, we can identify certain specific threshold networks that can be
compressed into logarithmically smaller ones using \relus. Suppose we
wish to compress a two layer threshold network with three sign hidden
units into a \relu{} network with $\ceil{\log_2{3}} =2$ hidden units.
The sign network can be represented by
\begin{eqnarray*}
  y = \text{sgn}(2 + \threshold{\bv_1 \cdot \bx + d_1} + \threshold{\bv_2 \cdot \bx + d_2} \\
  + \threshold{\bv_3 \cdot \bx + d_3})
\end{eqnarray*}
Suppose one of the weight vectors can be written as the linear
combination of the other two but its bias can not. That is, for some
$p$ and $q$, if $\bv_3 = p \bv_1 + q \bv_2$ and $d_3 \ne pd_1 + qd_2$.
Then, we can construct the following equivalent \relu{} network that is
equivalent:
\begin{align*}
   y = &\threshold{-1 + \rectifier{\bu_1 \cdot \bx + b_1} +\rectifier{\bu_2 \cdot \bx + b_2}}, \\
   \mbox{where  \quad} r &= \frac{1}{d_3 - p d_1 - q d_2},\\
  \bu_1 &= pr\bv_1,\\
  \bu_2 &= qr\bv_2, \\
  b_1 &= prd_1+1, \\
 b_2 &= qrd_2 + 1.
\end{align*}
This equivalence can be proved by applying Theorem \ref{theorem:main}
to the constructed \relu{} network. It shows that in two dimensions,
we can use two \relus{} to represent three linearly independent sign
units.

We can generalize this result to the case of a two-layer threshold
network with $2^n$ hidden threshold units that represents a
disjunction over the hidden units. The goal is to find that under what
condition we can use only $n$ rectifier units to represent the same
decision. To do so, we will use binary encoding matrix $T_n$ of size
$n\times 2^n$ whose $i^{th}$ column is the binary representation of
$i-1$. For example, the binary encoding matrix for $n=3$ is given by
$T_3$,
\begin{equation*}
  T_3 = 
  \begin{bmatrix}
    0 & 0 & 0 & 0 & 1 & 1 & 1 & 1 \\
    0 & 0 & 1 & 1 & 0 & 0 & 1 & 1 \\
    0 & 1 & 0 & 1 & 0 & 1 & 0 & 1 \\
  \end{bmatrix}
\end{equation*}
\begin{lemma}\label{lemma:encoding}
  Consider a two-layer threshold network with $2^n$ threshold units in
  the hidden layer whose output represents a disjunction over the
  hidden units, i.e., the final output is positive if and only if at
  least one of the hidden-unit outputs is positive. That is, 
  \begin{equation}\label{eq:disjunction}
    y = \threshold{2^n-1 + \sum_{k=1}^{2^n}  \threshold{\bv_k \cdot \bx + d_k}}.
  \end{equation}
  This decision can be represented using a two-layer rectifier
  network with $n$ hidden units, if the weight parameters of the
  threshold units can be factored in the following form:
  \begin{equation}\label{eq:combined}
    \begin{bmatrix}
      \bv_1 & \cdots & \bv_{2^n} \\
      d_1 & \cdots & d_{2^n}
    \end{bmatrix}
    =
    \begin{bmatrix}
      \mathbf{u}_1 & \cdots & \mathbf{u}_n & \mathbf{0}\\
      b_1 & \cdots & b_n & w_0
    \end{bmatrix}
    \begin{bmatrix}
      T_n \\
      \be_{2^n}
    \end{bmatrix}
  \end{equation}
  where $\be_{2^n}$ is a $2^n$ dimensional row vector of all ones and
  ${\bf 0}$ is a vector of all zeros.
\end{lemma}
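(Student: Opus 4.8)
The plan is to recognize that the factorization in \eqref{eq:combined} is engineered precisely so that the $2^n$ threshold pre-activations become exactly the $2^n$ subset-sum hyperplanes that the Main Theorem associates with an all-positive \relu{} network; once this identification is made, the equivalence follows by a direct appeal to Theorem~\ref{theorem:main}.

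First I would unpack \eqref{eq:combined} column by column. The $k^{th}$ column of $T_n$ is the binary representation of $k-1$; write $\sS_k \subseteq [n]$ for the set of row-indices at which this column equals $1$. Reading off the $k^{th}$ column of the matrix product, the all-ones row $\be_{2^n}$ pairs with the last column $[\mathbf{0};\, w_0]$ of the left factor to contribute the bias $w_0$, and the $T_n$ rows select the $\bu_i$ and $b_i$ indexed by $\sS_k$. Thus $\bv_k = \sum_{i \in \sS_k} \bu_i$ and $d_k = w_0 + \sum_{i \in \sS_k} b_i$. Setting $a_i(\bx) = \bu_i \cdot \bx + b_i$ as in the rest of the paper, this collapses each threshold pre-activation to
\[
  \bv_k \cdot \bx + d_k = w_0 + \sum_{i \in \sS_k} a_i(\bx).
\]
As $k$ runs over $1, \ldots, 2^n$, the sets $\sS_k$ enumerate every subset of $[n]$ exactly once, so the $2^n$ threshold units are in bijection with the subsets of $[n]$.

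Next I would dispose of the disjunction. A sign count on \eqref{eq:disjunction} shows the output is $+1$ precisely when at least one inner $\threshold{\cdot}$ is $+1$: if $m$ of the hidden units fire, the argument of the outer sign is $2^n - 1 + (2m - 2^n) = 2m-1$, which is negative for $m=0$ and positive for $m \ge 1$. Combined with the previous step, the threshold network classifies $\bx$ as positive if and only if there exists a subset $\sS \subseteq [n]$ with $w_0 + \sum_{i \in \sS} a_i(\bx) \ge 0$.

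Finally I would exhibit the matching \relu{} network: take the $n$ hidden units $a_1, \ldots, a_n$ as all-positive units (so $\sP = [n]$ and $\sN = \emptyset$) with output bias $w_0$, giving
\[
  y = \threshold{w_0 + \sum_{k=1}^n \rectifier{a_k(\bx)}}.
\]
Applying Theorem~\ref{theorem:main} with $\sN = \emptyset$ — so that the only subset $\sS_2$ is $\emptyset$ — conditions \ref{theorem:main:condition1} and \ref{theorem:main:condition2} state that this network outputs positive exactly when there exists $\sS_1 \subseteq [n]$ with $w_0 + \sum_{k \in \sS_1} a_k(\bx) \ge 0$. This is identical to the positivity condition derived for the threshold network, so the two agree on every $\bx$. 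The main obstacle is really just the bookkeeping in the first step — correctly tracking how the columns of $T_n$ and the all-ones row $\be_{2^n}$ distribute the $\bu_i$, $b_i$, and $w_0$ across the $\bv_k$ and $d_k$. Once the pre-activations are seen to be the subset-sums $w_0 + \sum_{i \in \sS} a_i(\bx)$, the equivalence is immediate from the all-positive case of the Main Theorem, equivalently from the pure-disjunction corollary noted after it, and no further construction is required.
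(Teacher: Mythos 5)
Your proposal is correct and follows the same route as the paper: construct the rectifier network $y = \threshold{w_0 + \sum_{k=1}^n \rectifier{\bu_k \cdot \bx + b_k}}$ from the factorization and invoke Theorem~\ref{theorem:main} with $\sN = \emptyset$. The paper's proof is just a terser version of yours; your column-by-column unpacking showing $\bv_k \cdot \bx + d_k = w_0 + \sum_{i \in \sS_k} a_i(\bx)$ and the sign-count argument for the disjunction are exactly the details it leaves implicit.
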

\begin{proof}
If the weight parameters $\mathbf{v}_k$ and $d_k$ can be written in the form as in Eq.~\eqref{eq:combined}, then we can construct the two-layer rectifier network,
\begin{equation}\label{eq:relu_equivalent}
  y = \text{sgn} \bigl[w_0 + \sum_{k=1}^n R(\mathbf{u}_k \cdot \mathbf{x} + b_k)\bigr].
\end{equation}
Then by virtue of theorem~\ref{theorem:main}, the decision boundary of
the rectifier network in Eq.~\eqref{eq:relu_equivalent} is the same as
the decision boundary of the threshold network in
Eq.~\eqref{eq:disjunction}.
\end{proof}
Note that this lemma only identifies sufficient conditions for the
logarithmic reduction in network size. Identifying both necessary and
sufficient conditions for such a reduction is an open question.


\section{Hidden Layer Equivalence}
\label{sec:multiclass}
Lemma \ref{lemma:encoding} studies a specific threshold network, where
the output layer is a disjunction over the hidden layer units. For
this network, we can define an different notion of equivalence between
networks by studying the hidden layer activations. We do so by
interpreting the hidden layer state of the network as a specific kind
of a multiclass classifier that either rejects inputs or labels them.
If the output is negative, then clearly none of the hidden layer units
are active and the input is rejected. If the output layer is positive,
then at least one of the hidden layer units is active and the
multiclass label is given by the maximum scoring hidden unit, namely
$\argmax_k \bv_k \cdot \bx + d_k$.

For threshold networks, the number of hidden units is equal to the
number of classes. The goal is to learn the same concept with
rectifier units, hopefully with fewer rectifier units than the number
of classes. Suppose a \relu{} network has $n$ hidden units, then its
hidden layer prediction is the highest scoring hidden unit of the
corresponding threshold network that has $2^n$ hidden units. We now
define {\em hidden layer equivalence} of two networks as follows: A
threshold network and a \relu{} network are equivalent if both their
hidden layer predictions are identical.

We already know from lemma~\ref{lemma:encoding} that if the weight
parameters of the true concept satisfy Eq.~\eqref{eq:combined}, then
instead of learning $2^n$ threshold units we can just learn $n$
rectifier units. For simplicity, we write Eq.~\eqref{eq:combined} as
$V = UT$ where
\begin{eqnarray*}
  V = \begin{bmatrix}
    \mathbf{v}_1 & \cdots & \mathbf{v}_{2^n} \\
    d_1 & \cdots & d_{2^n}
  \end{bmatrix} &
  U = 
  \begin{bmatrix}
    \mathbf{u}_1 & \cdots & \mathbf{u}_n & \mathbf{0}\\
    b_1 & \cdots & b_n & w_0
  \end{bmatrix} 
\end{eqnarray*}
and
\begin{equation*}
  T = 
  \begin{bmatrix}
    T_n \\
    \be_{2^n}
  \end{bmatrix}
\end{equation*}
For simplicity of notation, we will assume that the input features
$\bx$ includes a constant bias feature in the last position. Thus,
the vector $V^T\bx$ represents the pre-activation score for each class.

Now, we consider threshold networks with parameters such that there is
{\em no} \relu{} (defined by the matrix $U$) that satisfies this
condition. Instead, we find a rectifier network with parameters $U$
that satisfies the following condition:
\begin{equation}\label{eq:norm}
  U = \text{argmin}_U \Vert (V-UT)^T\Vert_{\infty},
\end{equation}
Here $\Vert\cdot \Vert_\infty$ is the induced infinity norm, defined
for any matrix $A$ as
$\Vert A \Vert_\infty = \sup_{x\ne 0} \frac{\Vert Ax\Vert_\infty}{\Vert x\Vert_\infty}$.

If we have a matrix $U$ such that $V$ and $UT$ are close in the sense
of induced infinity norm, then we have the following about their
equivalence.
\begin{theorem}\label{theorem:bound}
  If the true concept of a $2^n$-class classifier is given by a
  two-level threshold network in Eq.~\eqref{eq:disjunction}, then we
  can learn a two-layer rectifier network with only $n$ hidden units
  of the form in Eq.~\eqref{eq:relu_equivalent} that is hidden layer
  equivalent to it, if for any example $\bx$, we have
  \begin{equation}
    \Vert (V-UT)^T\Vert_{\infty} \le \frac{\gamma(\bx)}{2 \Vert \bx \Vert_\infty},
  \end{equation}
  where $\gamma(\bx)$ is the multiclass margin for $\bx$, defined as
  the difference between its highest score and second-highest scoring
  classes.
\end{theorem}
The proof, in the supplementary material, is based on the intuition
that for hidden layer equivalence, as defined above, {\em only}
requires that the highest scoring label needs to be the same in the
two networks rather than the actual values of the scores. If $V$ and
$UT$ are closed in the sense of induced infinity norm, then the
highest scoring hidden unit will be invariant regardless of which
network is used.


\section{Experiments}
\label{sec:experiments}
We have seen that every two-layer rectifier network expresses the
decision boundary of a three-layer threshold network. If the output
weights of the former are all positive, then a two-layer threshold
network is sufficient. (See the discussion in \S
\ref{sec:boolean-expressiveness}.) However, the fact that rectifier
network can express the same decision boundary more compactly does not
guarantee learnability because of optimization issues. Specifically,
in this section, we study the following question using synthetic data:
{\em Given a rectifier network and a threshold network with same
  decision boundary, can we learn one using the data generated from
  another using backpropagation?}

\subsection{Data generation}
\label{sec:data-gen}
We use randomly constructed two-layer rectifier networks to generate
labeled examples. To do so, we specify various values of the input
dimensionality and the number of hidden \relu{} units in the network.
Once we have the network, we randomly generate the input points and
label them using the network.
Using generated data we try to recover both the rectifier network and
the threshold network, with varying number of hidden units. 
We considered input dimensionalities 3, 10 and 50 and in each case,
used 3 or 10 hidden units. This gave us six networks in all. For each
network, we generated 10000 examples and 1500 of which are used as test
examples.

\subsection{Results and Analysis}
\label{sec:results}
For each dataset, we compare three different network architectures.
The key parameter that varies across datasets is $n$, the number of hidden
\relu{} units in the network that generated the data.
The first setting learns using a \relu{} network with $n$ hidden
units. The second setting uses the activation function $\tanh(cx)$,
which we call the compressed $\tanh$ activation. For large values of
$c$, this effectively simulates the threshold function. In the second
setting, the number of hidden units is still $n$. The final setting
learns using the compressed $\tanh$, but with $2^n$ hidden units
following \S \ref{subsec:relu-decisions}.

\begin{figure}[htbp] 
   \centering
    \includegraphics[width=0.5\textwidth]{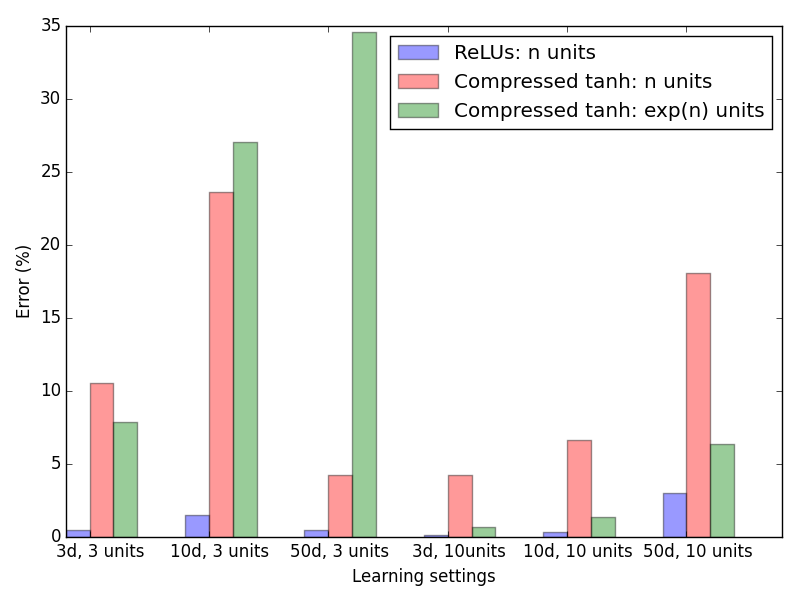} 
    \caption{{\footnotesize Test error for different learning
        settings. The x-axis specifies the number of ReLUs used for
        data generalization and the dimensionality. Each dataset is
        learned using ReLU and compressed $\tanh$ activations with
        different number of hidden units. Learning rate is selected
        with cross-validation from
        $\{10^0, 10^{-1}, 10^{-2}, 10^{-3}, 10^{-4}\}$.
        $L_2$-regularization coefficient is $10^{-4}$. We use
        early-stopping optimization with a maximum of 1000 epochs. The
        minibatch size is 20. For the compressed $\tanh$, we set
        $c=10000$.}}
   \label{fig:exp}
\end{figure}

Figure~\ref{fig:exp} shows the results on the six datasets. These
results verify several aspects of our theory. First, learning using
ReLUs always succeeds with low error (purple bars, left). This is
expected -- we know that our hypothesis class can express the true
concept and training using backpropagation can successfully find it.
Second, learning using compressed $\tanh$ with same number of units
cannot recover the true concept (red bars, middle). This performance
drop is as expected, since compressed $\tanh$ is just like sign
activation, and we know in this case we need exponentially more hidden
units.

Finally, the performances of learning using exponential number of
compressed $\tanh$ (green bars, right) are not always
good.\footnote{We also evaluated the performance on the training set.
  The training errors for all six datasets for all three learning
  scenarios are very close to test error, suggesting that over-fitting
  is not an issue.} In this case, from the analysis in \S
\ref{sec:expressiveness}, we know the hypothesis can certainly express
the true concept; yet learning does not always succeed! In fact, for
the first three groups, where we have three ReLUs for data generation,
the error for the learned classifier is rather large, suggesting that
even though the true concept can be expressed, it is not found by
backpropagation. For the last three groups, where we have 10 hidden
ReLUs for data generation, using exponential number of compressed
$\tanh$ does achieve better performance. We posit that this
incongruence is due to the interplay between the non-convexity of the
objective function and the fact that the set of functions expressed by
threshold functions is larger (a consequence of lemma
\ref{lemma:counter_example}).


\section{Conclusions}
\label{sec:conclusions}

In this paper, we have presented a novel analysis of the
expressiveness of rectifier neural networks. Specifically, for binary
classification we showed that even though the decision boundary of
two-layer rectifier network can be represented using threshold unit
network, the number of threshold units required is exponential.
Further, while a corresponding general logarithmic reduction of
threshold units is not possible, for specific networks, we
characterized sufficient conditions for reducing a threshold network
to a much smaller rectifier network. We also presented a relaxed
condition where we can approximately recover a rectifier network that
is hidden layer equivalent to an exponentially larger threshold
network.

Our work presents a natural next step: can we use the equivalence of
the expressiveness results given in this paper to help us study the
sample complexity of rectifier networks? Another open question is the
generalization of these results to deep networks. Finally, from our
experiments we see that expressiveness is not enough to guarantee
learnability. Studying the interplay of expressiveness, sample
complexity and the convexity properties of the training objective
function for rectifier networks represents an exciting direction of
future research.


\section*{Acknowledgements}
\label{sec:acknowledgements}

We are grateful to the reviewers for their invaluable comments and
pointers to related work. Xingyuan was partly supported by the School
of Computing PhD fellowship.


\bibliography{relu-icml2016}

\begin{thebibliography}{21}
\providecommand{\natexlab}[1]{#1}
\providecommand{\url}[1]{\texttt{#1}}
\expandafter\ifx\csname urlstyle\endcsname\relax
  \providecommand{\doi}[1]{doi: #1}\else
  \providecommand{\doi}{doi: \begingroup \urlstyle{rm}\Url}\fi

\bibitem[Bengio et~al.(1994)Bengio, Simard, and Frasconi]{bengio1994learning}
Bengio, Y., Simard, P., and Frasconi, P.
\newblock Learning long-term dependencies with gradient descent is difficult.
\newblock \emph{IEEE Transactions on Neural Networks}, 5\penalty0 (2):\penalty0
  157--166, 1994.

\bibitem[Blum \& Rivest(1992)Blum and Rivest]{blum1992training}
Blum, Avrim~L. and Rivest, Ronald~L.
\newblock Training a 3-node neural network is np-complete.
\newblock \emph{Neural Networks}, 5\penalty0 (1):\penalty0 117 -- 127, 1992.

\bibitem[Cybenko(1989)]{Cybenko1989}
Cybenko, G.
\newblock Approximation by superpositions of a sigmoidal function.
\newblock \emph{Mathematics of Control, Signals and Systems}, 2\penalty0
  (4):\penalty0 303--314, 1989.

\bibitem[Daniely et~al.(2014)Daniely, Linial, and Shalev-Shwartz]{Daniely2014}
Daniely, Amit, Linial, Nati, and Shalev-Shwartz, Shai.
\newblock From average case complexity to improper learning complexity.
\newblock In \emph{Proceedings of the 46th Annual ACM Symposium on Theory of
  Computing}, 2014.

\bibitem[DasGupta \& Schnitger(1993)DasGupta and Schnitger]{DasGupta1993}
DasGupta, Bhaskar and Schnitger, Georg.
\newblock The power of approximating: a comparison of activation functions.
\newblock In \emph{Advances in Neural Information Processing Systems 5}, pp.\
  615--622. 1993.

\bibitem[Glorot et~al.(2011)Glorot, Bordes, and Bengio]{Glorot2011}
Glorot, Xavier, Bordes, Antoine, and Bengio, Yoshua.
\newblock {Deep Sparse Rectifier Neural Networks}.
\newblock In \emph{Proceedings of the 14th International Conference on
  Artificial Intelligence and Statistics}, 2011.

\bibitem[Hajnal et~al.(1993)Hajnal, Maass, Pudl{\'a}k, Szegedy, and
  Tur{\'a}n]{hajnal1993threshold}
Hajnal, Andr{\'a}s, Maass, Wolfgang, Pudl{\'a}k, Pavel, Szegedy, Mario, and
  Tur{\'a}n, Gy{\"o}rgy.
\newblock Threshold circuits of bounded depth.
\newblock \emph{Journal of Computer and System Sciences}, 46\penalty0
  (2):\penalty0 129--154, 1993.

\bibitem[Hochreiter(1998)]{hochreiter1998vanishing}
Hochreiter, Sepp.
\newblock The vanishing gradient problem during learning recurrent neural nets
  and problem solutions.
\newblock \emph{International Journal of Uncertainty, Fuzziness and
  Knowledge-Based Systems}, 06\penalty0 (02):\penalty0 107--116, 1998.

\bibitem[Klivans \& Sherstov(2006)Klivans and Sherstov]{Klivans2006}
Klivans, A.~R. and Sherstov, A.~A.
\newblock Cryptographic hardness for learning intersections of halfspaces.
\newblock In \emph{47th Annual IEEE Symposium on Foundations of Computer
  Science}, 2006.

\bibitem[Krizhevsky et~al.(2012)Krizhevsky, Sutskever, and
  Hinton]{Krizhevsky2012}
Krizhevsky, Alex, Sutskever, Ilya, and Hinton, Geoffrey~E.
\newblock Imagenet classification with deep convolutional neural networks.
\newblock In \emph{Advances in Neural Information Processing Systems 25}, pp.\
  1097--1105. 2012.

\bibitem[Livni et~al.(2014)Livni, Shalev-Shwartz, and
  Shamir]{livni2014computational}
Livni, Roi, Shalev-Shwartz, Shai, and Shamir, Ohad.
\newblock On the computational efficiency of training neural networks.
\newblock In \emph{Advances in Neural Information Processing Systems 27}, pp.\
  855--863. 2014.

\bibitem[Maas et~al.(2013)Maas, Hannun, and Ng]{maas2013rectifier}
Maas, Andrew~L., Hannun, Awni~Y., and Ng, Andrew~Y.
\newblock Rectifier nonlinearities improve neural network acoustic models.
\newblock In \emph{ICML Workshop on Deep Learning for Audio, Speech and
  Language Processing}, 2013.

\bibitem[Maass et~al.(1991)Maass, Schnitger, and
  Sontag]{maass1991computational}
Maass, W., Schnitger, G., and Sontag, E.~D.
\newblock On the computational power of sigmoid versus boolean threshold
  circuits.
\newblock In \emph{32nd Annual Symposium on Foundations of Computer Science},
  1991.

\bibitem[Maass et~al.(1994)Maass, Schnitger, and Sontag]{maass1994comparison}
Maass, Wolfgang, Schnitger, Georg, and Sontag, Eduardo~D.
\newblock A comparison of the computational power of sigmoid and boolean
  threshold circuits.
\newblock In \emph{Theoretical Advances in Neural Computation and Learning},
  pp.\  127--151. 1994.

\bibitem[Martens et~al.(2013)Martens, Chattopadhya, Pitassi, and
  Zemel]{Martens2013}
Martens, James, Chattopadhya, Arkadev, Pitassi, Toni, and Zemel, Richard.
\newblock On the representational efficiency of restricted boltzmann machines.
\newblock In \emph{Advances in Neural Information Processing Systems 26}, pp.\
  2877--2885. 2013.

\bibitem[Montufar et~al.(2014)Montufar, Pascanu, Cho, and Bengio]{Montufar2014}
Montufar, Guido~F, Pascanu, Razvan, Cho, Kyunghyun, and Bengio, Yoshua.
\newblock On the number of linear regions of deep neural networks.
\newblock In \emph{Advances in Neural Information Processing Systems 27}, pp.\
  2924--2932. 2014.

\bibitem[Nair \& Hinton(2010)Nair and Hinton]{nair2010rectified}
Nair, Vinod and Hinton, Geoffrey~E.
\newblock Rectified linear units improve restricted {Boltzmann} machines.
\newblock In \emph{Proceedings of the 27th International Conference on Machine
  Learning}, 2010.

\bibitem[Pascanu et~al.(2014)Pascanu, Montufar, and Bengio]{Pascanu2014}
Pascanu, Razvan, Montufar, Guido, and Bengio, Yoshua.
\newblock {On the number of response regions of deep feedforward networks with
  piecewise linear activations}.
\newblock In \emph{International Conference on Learning Representations}, 2014.

\bibitem[Telgarsky(2015)]{Telgarsky2015}
Telgarsky, Matus.
\newblock {Representation Benefits of Deep Feedforward Networks}.
\newblock \emph{arXiv:1509.01801}, 2015.

\bibitem[Zaslavsky(1975)]{Zaslavsky1975}
Zaslavsky, Thomas.
\newblock \emph{Facing up to arrangements: face-count formulas for partitions
  of space by hyperplanes}.
\newblock American Mathematical Society, 1975.

\bibitem[Zeiler et~al.(2013)Zeiler, Ranzato, Monga, Mao, Yang, Le, Nguyen,
  Senior, Vanhoucke, Dean, and Hinton]{zeiler2013rectified}
Zeiler, M.~D., Ranzato, M., Monga, R., Mao, M., Yang, K., Le, Q.~V., Nguyen,
  P., Senior, A., Vanhoucke, V., Dean, J., and Hinton, G.~E.
\newblock On rectified linear units for speech processing.
\newblock In \emph{2013 IEEE International Conference on Acoustics, Speech and
  Signal Processing}, 2013.

\end{thebibliography}
\bibliographystyle{icml2016}

\newpage

\icmltitlerunning{Expressiveness of Rectifier Networks (Supplementary Material)}

\twocolumn[
\icmltitle{Expressiveness of Rectifier Networks \\ (Supplementary Material)}


\icmlkeywords{machine learning, artificial neural network, rectified linear unit, expressiveness}

\vskip 0.3in
]

\section*{Proof of Theorem~\ref{theorem:main}}\label{appendix:main_theorem_appendix}
In this section, we prove our main theorem, Theorem~\ref{theorem:main},
which is repeated here for convenience.
\newtheorem*{theorem-main-repeated}{Theorem \ref{theorem:main}}
\begin{theorem-main-repeated} Consider a two-layer rectifier network
  with $n$ hidden units represented in its general form
  (Eq.~\eqref{eq:general_relu}). Then, for any input $\bx$, the
  following conditions are equivalent:
  \begin{enumerate}
  \item The network classifies the example $\bx$ as positive.
  \item There exists a subset $\sS_1$ of $\sP$ such that, for every
    subset $\sS_2$ of $\sN$, we have
    $w_0 + \sum_{k \in \sS_1} a_k(\bx) - \sum_{k \in \sS_2} a_k(\bx)
    \ge 0$.
  \item For every subset $\sS_2$ of $\sN$, there exists a subset
    $\sS_1$ of $\sP$ such that
    $w_0 + \sum_{k \in \sS_1} a_k(\bx) - \sum_{k \in \sS_2} a_k(\bx)
    \ge 0$.
  \end{enumerate}
\end{theorem-main-repeated}

{\bf Step 1: Equivalence of conditions 1 and 2}

Let us prove that condition 1 implies condition 2 first. We can
construct a subset $\mathcal{S}_1^*$ of $\mathcal{P}$
\begin{equation*}
  \mathcal{S}_1^* = \{k: k \in \mathcal{P} \text{ and } a_k(\mathbf{x}) \ge 0\},
\end{equation*}
such that 
\begin{equation*}
  \sum_{k \in \mathcal{P}} R(a_k(\mathbf{x})) = \sum_{k \in \mathcal{S}_1^*} a_k(\mathbf{x}).
\end{equation*} 
The example $\bx$ is classified as positive implies that
\begin{equation*}
  w_0 + \sum_{k \in \mathcal{S}_1^*} a_k(\mathbf{x})  \ge \sum_{k \in \mathcal{N}} R(a_k(\mathbf{x})).
\end{equation*}
For any subset $\mathcal{S}_2$ of $\mathcal{N}$, we have 
\begin{equation*}
  \sum_{k \in \mathcal{N}} R(a_k(\mathbf{x})) \ge \sum_{k \in \mathcal{S}_2} R(a_k(\mathbf{x})) \ge \sum_{k \in \mathcal{S}_2} a_k(\mathbf{x}).
\end{equation*}
Therefore, for any subset $\mathcal{S}_2$ of $\mathcal{N}$,
\begin{equation*}
  w_0 + \sum_{k \in \mathcal{S}_1^*} a_k(\mathbf{x}) \ge \sum_{k \in \mathcal{S}_2} a_k(\mathbf{x}).
\end{equation*}

Now, we need to show that condition 2 implies condition 1. Assume
there is a subset $\mathcal{S}_1$ of $\mathcal{P}$ such that for any
subset $\mathcal{S}_2$ of $\mathcal{N}$,
$w_0 + \sum_{k \in \mathcal{S}_1} a_k(\mathbf{x}) \ge \sum_{k \in
  \mathcal{S}_2} a_k(\mathbf{x}) $.
Let us define a specific subset $\mathcal{S}_2^*$ of $\mathcal{N}$,
\begin{equation*}
  \mathcal{S}_2^* = \{k: k \in \mathcal{N} \text{ and } a_k(\mathbf{x}) \ge 0\},
\end{equation*}
such that
\begin{equation*}
\sum_{k \in \mathcal{N}} R(a_k(\mathbf{x})) = \sum_{k \in \mathcal{S}_2^*} a_k(\mathbf{x}).
\end{equation*} 
We know that
\begin{equation*}
  \sum_{k \in \mathcal{P}} R(a_k(\mathbf{x})) \ge  \sum_{k \in \mathcal{S}_1} R(a_k(\mathbf{x})) \ge \sum_{k \in \mathcal{S}_1} a_k(\mathbf{x})
\end{equation*}
and
\begin{equation*}
  w_0 + \sum_{k \in \sS_1} a_k(\bx) \ge \sum_{k \in \sS_2^*} a_k(\bx).
\end{equation*}
Therefore,
\begin{equation*}
w_0 + \sum_{k \in \mathcal{P}} R(a_k(\mathbf{x})) \ge \sum_{k \in \mathcal{N}} R(a_k(\mathbf{x}))
\end{equation*}
which means that the decision function $y$ in
Eq.~\eqref{eq:general_relu} is positive.

{\bf Step 2: Equivalence of conditions 1 and 3}

That condition 1 implies condition 3 holds by virtue of the first
part of the previous step. We only need to prove that condition 3
implies 1 here. Assume for all subset $\mathcal{S}_2$ of $\mathcal{N}$
there is a subset $\mathcal{S}_1$ of $\mathcal{P}$ such that $w_0 +
\sum_{k \in \mathcal{S}_1} a_k(\mathbf{x}) - \sum_{k \in
  \mathcal{S}_2} a_k(\mathbf{x}) \ge 0$. Use the same $\sS_2^*$
defined in previous step
\begin{eqnarray*}
  w_0 + \sum_{k \in \mathcal{P}} R(a_k(\mathbf{x})) 
  &\ge& w_0 + \sum_{k \in \mathcal{S}_1} R(a_k(\mathbf{x}))  \\
  &\ge& w_0 + \sum_{k \in \mathcal{S}_1} a_k(\mathbf{x}) \\
  &\ge& \sum_{k \in \mathcal{S}_2^*} a_k(\mathbf{x}) \\
  &=& \sum_{k \in \mathcal{N}} R(a_k(\mathbf{x}))
\end{eqnarray*}
Therefore, the decision function $y$ in Eq.~\eqref{eq:general_relu}
is positive. \qed


\section*{Proof of Theorem~\ref{thm:boolean-expressiveness}}
\label{appendix:tight_bound}

The first part of this theorem says that every rectifier network with
$n$ hidden units that are all positive or negative can be represented
by a threshold network with $2^n-1$ hidden units. This is a direct
consequence of the main theorem.

The second part of the theorem says that for any $n$, there are
families of rectifier networks whose equivalent threshold network will
need an exponential number of hidden threshold units. We prove this
assertion constructively by providing one such rectifier network.
Consider the decision function of a two-layer rectifier network
\begin{equation*}
y = \text{sgn} [-1 + R(x_1) + R(x_2) + \dots + R(x_n)]
\end{equation*}
where $x_i$ is the $i^{th}$ component of the input vector (recall that
the dimensionality of the input $d \ge n$). From
Theorem~\ref{theorem:main} the decision boundary of this network can
be determined by $2^n-1$ hyperplanes of the form
$-1+\sum_{i \in \sS} x_i = 0$, each of which corresponds a
\emph{non-empty} subset $\sS \in [n]$. The output $y$ is positive if
any of these hyperplanes classify the example as positive.

To prove that we need at least $2^n-1$ threshold units to represent
the same decision boundary, it suffices to prove that each of the
$2^n-1$ hyperplanes is needed to define the decision boundary.

Consider any hyperplane defined by \emph{non-empty} subset
$\sS \in [n]$ whose cardinality $s = |\sS|$. Let $\bar{\sS}$ be the
complement of $\sS$. Consider an input vector $\bx$ that satisfies the
following condition if $s>1$:
\begin{equation*}
\begin{cases}
1/s < x_i < 1/({s-1}), & \text{if $i \in \sS$} \\
x_i < -1/(s-1), & \text{if $i \in \bar{\sS}$}
\end{cases}
\end{equation*}
For those subsets $\sS = \{x_j\}$ whose cardinality is one, we can let
$x_j>1$ and all the other components $x_i<-x_j$. Clearly, the
rectifier network will classify the example as positive. Furthermore,
by construction, it is clear that $-1 + \sum_{i \in \sS} x_i > 0$, and
for all other subset $\sS' \in [n]$, $-1 + \sum_{i \in \sS'} x_i < 0$.
In other words, {\em only} the selected hyperplane will classify the
input as a positive one. That is, this hyperplane is required to
define the decision boundary because without it, the examples in the
above construction will be incorrectly classified by the threshold
network.

%
Therefore we we have identified the decision boundary of the given
rectifier network as an polytope with \emph{exactly} $2^n-1$ faces, by
constructing $2^n-1$ hyperplanes using $2^n-1$ \emph{non-empty}
subsets $\sS \in [n]$. To complete the proof, we need to show that
other construction methods cannot do better than our construction,
i.e., achieving same decision boundary with less hidden threshold
units. To see this, note that for each face of the decision polytope,
one needs a hidden threshold unit to represent it. Therefore no matter
how we construct the threshold network, we need at least $2^n-1$
hidden threshold units. \qed

\section*{Proof of Lemma~\ref{lemma:counter_example}}
\label{appendix:counter_example}

In this section, we provide a simple example of a two-layer threshold
network, whose decision boundary cannot be represented by a two-layer
rectifier network with fewer hidden units. Consider a threshold
network
\begin{equation*}\label{eq:three_signs}
  y = \text{sgn} [n-1 + \text{sgn}(x_1) + \text{sgn}(x_2) + \dots + \text{sgn}(x_n)],
\end{equation*}
where $x_i$ is the $i^{th}$ component of the input vector (here we
assume the dimensionality of the input $d \ge n > 1$). It is easy to
see that the decision function of this network is positive, if, and
only if at least one of the component $x_i$ is non-negative. From a
geometric point of view, each $x_i$ defines a hyperplane $x_i=0$. Let
$H_1$ be the set of $n$ such hyperplanes,
\begin{equation*}
H_1 = \{x_i=0: i \in [n]\}
\end{equation*}
These $n$ hyperplanes form $2^n$ hyperoctants and only one hyperoctant
gets negative label.

Now, suppose we construct a two-layer network with $m$ rectifier units
that has the same decision boundary as the above threshold network,
and it has the form
\begin{equation*}\label{eq:m_relus}
  y = \threshold{w_0 + \sum_{k \in \sP} \rectifier{a_k(\bx)} - \sum_{k \in \sN} \rectifier{a_k(\bx)}}
\end{equation*}
using the same notations as in Theorem~\ref{theorem:main}. From the
theorem, we know the decision boundary of the above equation is
determined by $2^m$ hyperplane equations and these $2^m$ hyperplanes
form a set $H_2$,
\begin{equation*}
H_2 = \Big\{w_0 + \sum_{k \in \sS_1} a_k(\bx) - \sum_{k \in \sS_2} a_k(\bx)= 0: \sS_1 \subseteq\sP, \sS_2\subseteq\sN\Big\}
\end{equation*}
Because we assume the rectifier network has the same decision function as the threshold network, we have
\begin{equation*}
H_1 \subseteq H_2.
\end{equation*}
Note that the hyperplanes in $H_1$ have normal vectors independent of
each other, which means there are at least $n$ hyperplanes in $H_2$
such that their normal vectors are independent. Recall that
$a_k(\bx) = \bu_k \cdot \bx + b_k$, so all normal vectors for
hyperplanes in $H_2$ can be expressed using linear combinations of $m$
vectors $\bu_1, \dots, \bu_m$. Since these $m$ vectors together define
the $n$ orthogonal hyperplanes in $H_1$, it is impossible to have
$m < n$. In other words, for this threshold network, the number of
hidden units cannot be reduced by any conversion to a \relu{} network.

\qed

\section*{Proof of Theorem~\ref{theorem:bound}}\label{appendix:bound}
In this section we prove our theorem about hidden layer equivalence,
Theorem~\ref{theorem:bound}, which is repeated here for convenience.
\newtheorem*{theorem-bound-repeated}{Theorem \ref{theorem:bound}}
\begin{theorem-bound-repeated}
  If the true concept of a $2^n$-class classifier is given by a
  two-level threshold network in Eq.~\eqref{eq:disjunction}, then we
  can learn a two-layer rectifier network with only $n$ hidden units
  of the form in Eq.~\eqref{eq:relu_equivalent} that is hidden layer
  equivalent to it, if for any example $\bx$, we have
  \begin{equation*}
    \Vert (V-UT)^T\Vert_{\infty} \le \frac{\gamma(\bx)}{2 \Vert \bx \Vert_\infty},
  \end{equation*}
  where $\gamma(\bx)$ is the multiclass margin for $\bx$, defined as
  the difference between its highest score and second-highest scoring
  classes.
\end{theorem-bound-repeated}
Let us define $\epsilon = \Vert (V-UT)^T\Vert_{\infty} \le
\frac{\gamma(x)}{2 \Vert x \Vert_\infty}$. From the definition of the
$L_\infty$ vector norm, we have
\begin{equation*}
  \Vert (V-UT)^T x\Vert_{\infty} \ge \vert((V-UT)^Tx)_k \vert
\end{equation*}
for all $x$ and all $k$. The subscript $k$ labels the $k^{th}$
component of the vector. From the definition of the induced norm we
have
\begin{equation*}
  \Vert (V-UT)^T x\Vert_{\infty} \le \epsilon \Vert x \Vert.
\end{equation*}
Combining the above two inequalities we have
\begin{equation*}
  \vert ((UT)^T x)_k - (V^Tx)_k \vert \le \epsilon \Vert x \Vert
\end{equation*}
for all $x$ and all $k$. Assuming $k^*$ is the highest scoring unit,
for $k^*$ we have
\begin{equation*}\label{eq:p1}
  (V^Tx)_{k^*} - ((UT)^Tx)_{k^*} \le \epsilon \Vert x \Vert,
\end{equation*}
For any other $k' \ne k^*$, we have
\begin{equation*}\label{eq:p2}
  ((UT)^Tx)_{k'} - (V^Tx)_{k'} \le \epsilon \Vert x \Vert.
\end{equation*}
From the definition of the margin $\gamma(x)$, we also know that
\begin{equation*}\label{eq:p3}
  (V^Tx)_{k^*} - (V^Tx)_{k'} \ge \gamma(x) \ge 2\epsilon \Vert x \Vert.
\end{equation*}
Combining the above three inequalities, we have
\begin{equation*}
  ((UT)^Tx)_{k'} \le ((UT)^Tx)_{k^*}
\end{equation*}
which means if $k^*$ is the correct class with the highest score
according to the weight parameters $V$, it will still be the highest
scoring class according to the weight parameters $UT$, even if $V
\ne UT$. \qed


\end{document}